\documentclass[conference]{IEEEtran}
\usepackage{cite,times}
\usepackage{amsfonts}
\usepackage{amsmath}
\usepackage{float}
\usepackage{array}
\usepackage{caption}
\usepackage{subcaption}
\usepackage{graphics}
\usepackage{graphicx, color}
\usepackage{epsf}
\usepackage{amssymb}
\usepackage{stackrel}
\usepackage{algorithm}
\usepackage{algorithmic}
\usepackage{dblfloatfix}
\usepackage{fixltx2e}
\usepackage{multicol}
\usepackage{blindtext, subfig}
\usepackage{soul}
\usepackage{dblfloatfix} 

\newtheorem{propo}{Proposition}[section]
\newtheorem{lemma}[propo]{Lemma}

\newtheorem{coro}[propo]{Corollary}
\newtheorem{thm}[propo]{Theorem}

\newtheorem{remark}[propo]{Remark}

\usepackage{color}


\def\reals{\mathbb{R}}

\def \sT{{\sf T}}

\def\<{\langle}
\def\>{\rangle}
\def\de {{\rm d}}
\def\ind{\mathbb{I}}

\begin{document}

\title{1-Bit Matrix Completion under Exact Low-Rank Constraint \vspace*{-0.3in}}

\author{\IEEEauthorblockN{\textit{Sonia A.\ Bhaskar} and \textit{Adel Javanmard} \vspace*{-0.15in}} \\
\IEEEauthorblockA{Department of Electrical Engineering, Stanford University, Stanford, CA 94304, USA\\
 Email: sbhaskar@stanford.edu, adelj@stanford.edu}}

\maketitle
\thispagestyle{empty}
\pagestyle{empty}

\begin{abstract}
We consider the problem of noisy 1-bit matrix completion under an exact rank constraint on the true underlying matrix $M^*$. Instead of observing a subset of the noisy continuous-valued entries of a matrix $M^*$, we observe a subset of noisy 1-bit (or binary) measurements generated according to a probabilistic model.  We consider constrained maximum likelihood estimation of $M^*$, under a constraint on the entry-wise infinity-norm of $M^*$ and an exact rank constraint. This is in contrast to previous work which has used convex relaxations for the rank. We provide an upper bound on the matrix estimation error under this model. Compared to the existing results, our bound has faster convergence rate with matrix dimensions when the fraction of revealed 1-bit observations is fixed, independent of the matrix dimensions. We also propose an iterative algorithm for solving our nonconvex optimization with a certificate of global optimality of the limiting point.  This algorithm is based on low rank factorization of $M^*$.  We validate the method on synthetic and real data with improved performance over existing methods. 
\end{abstract}

\section{INTRODUCTION}
The problem of recovering a low rank matrix from an incomplete or noisy sampling of its entries arises in a variety of applications, including collaborative filtering \cite{Koren09} and sensor network localization \cite{Shang04, Karbasi13}.   In many applications, the observations are not only missing, but are also highly discretized, e.g.\ binary-valued (1-bit) \cite{Davenport12, Cai13}, or multiple-valued \cite{Lan14a}. For example, in the Netflix problem where a subset of the users' ratings is observed, the ratings take integer values between 1 and 5.  Although one can apply existing matrix completion techniques to discrete-valued observations by treating them as continuous-valued, performance can be improved by treating the values as discrete \cite{Davenport12}. 

In this paper we consider the problem of completing a matrix from a subset of its entries, where instead of observing continuous-valued entries, we observe a subset of 1-bit measurements.
Given $M^* \in \reals^{m \times n}$, a subset of indices $\Omega \subseteq [m] \times [n]$, and a twice differentiable function $f\,:\, \reals \rightarrow [0,1]$, we observe (``w.p.''  stands for ``with probability'')
\begin{equation} \label{e1.1a}
   Y_{ij} = \begin{cases}
          +1 & \mbox{w.p.} \quad f(M^*_{ij})\,, \\
          -1 & \mbox{w.p.} \quad 1-f(M^*_{ij}) 
          \end{cases} 
          \quad \text{ for } (i,j)\in \Omega .
\end{equation}
One important application is the binary quantization of $Y_{ij} = M^*_{ij} + Z_{ij}$, where $Z$ is a noise matrix with i.i.d entries. If we take $f$ to be the cumulative distribution function of $-Z_{11}$, then the model in~\eqref{e1.1a} is equivalent to observing 
\begin{equation}\label{e1.1}
Y_{ij} = \begin{cases}
+1 & \mbox{if } \quad  M^*_{ij}+Z_{ij} > 0 \\
-1 & \mbox{if }  \quad  M^*_{ij}+Z_{ij} < 0
\end{cases}
\quad\; \text{  for } (i,j)\in \Omega.
\end{equation}
Recent work in the 1-bit matrix completion literature has followed the probabilistic model in \eqref{e1.1a}-\eqref{e1.1} for the observed matrix $Y$ and has estimated $M^*$ via solving a constrained maximum likelihood (ML) optimization problem. Under the assumption that $M^*$ is low-rank, these works have used convex relaxations for the rank via the trace norm \cite{Davenport12} or max-norm \cite{Cai13}. An upper bound on the matrix estimation error is given under the assumptions that the entries are sampled according to a uniform distribution \cite{Davenport12}, or in \cite{Cai13}, following a non-uniform distribution. 

In this paper, we follow \cite{Davenport12, Cai13} in seeking an ML estimate of $M^*$ but use an exact rank constraint on $M^*$ rather than a convex relaxation for the rank.  We follow the sampling model of \cite{Bhojanapalli:2014kx} for $\Omega$ which includes the uniform sampling of \cite{Davenport12} as well as non-uniform sampling.  We provide an upperbound on the Frobenius norm of matrix estimation error, and show that our bound yields faster convergence rate with matrix dimensions than the existing results of \cite{Davenport12, Cai13} when the fraction of revealed 1-bit observations is fixed independent of the matrix dimensions.  Lastly, we present an iterative algorithm for solving our nonconvex optimization problem with a certificate of global optimality under mild conditions. Our algorithm outperforms \cite{Davenport12, Cai13} in the presented simulation example.

{\bf Notation}: 
For matrix $A$ with $(i,j)$-th entry $A_{ij}$,  we use the notation $\|A\|_\infty = \underset{i,j}{\max} |A_{ij}|$ for the entry-wise infinity-norm, $\|A\|_F$ for the Frobenius norm and $\|A\|_2$ for its operator norm. We use $A_{i,\cdot}$ to denote the $i$-th row and $A_{\cdot,j}$ to denote the $j$-th column. Taking $\mathcal{S}$ to be a set, we use $|\mathcal{S}|$ to denote the cardinality of $\mathcal{S}$. The notation $[n]$ represents the set of integers $\{1,\hdots,n\}$.  We denote by $\mathbf{1}_n \in \reals^{n}$ the vector of all ones, by $\tilde{\mathbf{1}}_n$ the unit vector $\mathbf{1}_n / \sqrt{n}$, and by $\ind_{\mu}$ the indicator function, i.e. $\ind_{\mu} = 1$ when $\mu$ is true, else $\ind_{\mu} = 0$.
%
\section{MODEL ASSUMPTIONS} \label{sec:model}
We wish to estimate unknown $M^*$ using a constrained ML approach. We use $M \in \reals^{m \times n}$ to denote the optimization variable. Then the negative log-likelihood function for the given problem is 
\begin{align} 
   F_{\Omega, Y}(M) =  - \sum_{(i,j) \in \Omega} &\Big\{\ind_{(Y_{ij}=1)} \log(f(M_{ij}))  \nonumber \\
     & + \ind_{(Y_{ij}=-1)} \log(1-f(M_{ij})) \Big\} \label{e1.2}   
\end{align}
Note that \eqref{e1.2} is a convex function of $X$ when the function $f$ is log-concave.  Two common choices for which the function $f$ is log-concave are: $(i)$ Logit model with logistic function $f(x)= 1/(1+e^{-x/\sigma})$ and parameter $\sigma >0$, or equivalently $Z_{ij}$ in (\ref{e1.1}) is logistic with scale parameter $\sigma$; $(ii)$ Probit model with $f(x) = \Phi(x/\sigma)$
where $\sigma >0$ and $\Phi(x)$ is the cumulative distribution function of  ${\cal N}(0,1)$. 
We assume that $M^*$ is a low-rank matrix with rank bounded by $r$, and that the true matrix $M^*$ satisfies $\|M^*\|_\infty \le \alpha$, which helps make the recovery of $M^*$ well-posed by preventing excessive ``spikiness" of the matrix.  We refer the reader to \cite{Davenport12, Cai13} for further details.

The constrained ML estimate of interest is the solution to the optimization problem (s.t.: subject to):
\begin{equation} \label{e1.3}
   \widehat{M} = \arg\min_{M} F_{\Omega, Y}(M) \,\;\, \mbox{s.t.} \, \|M\|_\infty \le \alpha , \, {\rm rank}(M) \le r .
\end{equation}
In many applications, such as sensor network localization, collaborative filtering, or DNA haplotype assembly, the rank $r$ is known or can be reliably estimated \cite{keshavan2010matrix}. 

We now discuss our assumptions on the set $\Omega$. Consider a bipartite graph $G = ([m],[n], E)$, where the edge set $E \subseteq [m] \times [n]$ is related to the index set of revealed entries $\Omega$  as $(i,j) \in E$ iff $(i,j) \in \Omega$. Abusing the notation, we use $G$ for both the graph and its bi-adjacency matrix where $G_{ij}=1$ if $(i,j) \in E$, $G_{ij}=0$ if $(i,j) \not\in E$. 
We denote the association of ${G}$ to ${\Omega}$ by ${G} \backslash {\Omega}$. Without loss of generality we take $m \ge n$.
We assume that each row of $G$ has $d$ nonzero entries (thus $|\Omega| = m d $) with the following properties on its SVD:
\begin{itemize}
\item[{\bf (A1)}] The left and right top singular vectors of ${G}$ are $\mathbf{1}_{m} / \sqrt{m}$ and $\mathbf{1}_{n} / \sqrt{n}$, respectively.
This implies that $\sigma_1(G) = d\sqrt{m/n} \ge d$, where $\sigma_1(G)$ denotes the largest singular value of $G$, and that each column of $G$ has $(md/n)$ nonzero entries.
\item[{\bf (A2)}] We have $\sigma_2({G}) \leq C\sqrt{d}$, where $\sigma_2(G)$ denotes the second largest singular value of $G$ and $C>0$ is some universal constant. 
\end{itemize}
Thus we require ${G}$ to have a large enough spectral gap. As discussed in \cite{Bhojanapalli:2014kx}, an Erd\"os-Renyi random graph with average degree $d \ge c  \log(m)$ satisfies this spectral gap property with high probability, and so do stochastic block models for certain choices of inter- and intra-cluster edge connection probabilities. Thus, this sampling scheme is more general than a uniform sampling assumption, used in \cite{Davenport12}, and it also includes the stochastic block model \cite{Bhojanapalli:2014kx} resulting in non-uniform sampling. 
\vspace{-0.1in}
\section{PERFORMANCE UPPERBOUND} \label{sec:main}
We now present a performance bound for the solution to \eqref{e1.3}. With $\dot{f} (x) := (\de f(x)/\de x)$, define
\begin{align}
   \gamma_{\alpha} \le & \min \left(  \inf_{|x| \le \alpha} \left\{ \frac{ \dot{f}^2(x)}{f^2(x)} - \frac{ \ddot{f}(x)}{f(x)} \right\} 
       \right. , \nonumber \\
      & \hspace*{0.2in} \left. \inf_{|x| \le \alpha} \left\{ \frac{ \dot{f}^2(x)}{(1-f(x))^2} + \frac{ \ddot{f}(x)}{1-f(x)} \right\} \right) , \label{geq104} 
\end{align}
\begin{equation}
   L_{\alpha} \ge  \sup_{|x| \le \alpha} \left\{ \frac{ \left| \dot{f}(x) \right|}{f(x)(1-f(x))}  \right\}\,,\label{geq108} 
\end{equation}
where $\alpha$ is the bound on the entry-wise infinity-norm of $\widehat{M}$ (see \eqref{e1.3}). For the logit model, we have $L_{ \alpha } = 1/\sigma$, and $\gamma_{\alpha} = \frac{e^{\alpha/\sigma}}{\sigma^2 (1+e^{\alpha/\sigma})^2} \approx e^{-\alpha/\sigma} > 0$. For the probit model we obtain $L_{ \alpha } \le \frac{4}{\sigma} \left(\frac{\alpha}{\sigma} +1 \right)$, $\gamma_{\alpha}  \ge \frac{\alpha}{\sqrt{2 \pi} \sigma^3} \exp \left(-\frac{\alpha^2}{2 \sigma^2} \right) > 0$.
For further reference, define the constraint set 
\begin{equation} \label{e1.4}
   {\mathcal C} := \left\{ M \in \reals^{m \times n} \,:\,  \|M\|_\infty \le \alpha, \; {\rm rank}(M) \le r   \right\}\,.
\end{equation}

\begin{thm} \label{thm:main_gd} Suppose that $M^* \in {\mathcal C}$, 
and $G\backslash\Omega$ satisfies assumptions (A1) and (A2), with $m \geq n$. Further, suppose $Y$ is generated according to (\ref{e1.1a})
and $f(x)$ is log-concave in $x$. Then with probability at least 
$1 -  C_1 \exp(-C_2 m) $,
any global minimizer $\widehat{M}$ of (\ref{e1.3}) satisfies
\begin{align}  
     \frac{1}{\sqrt{mn}} &  \| \widehat{M} - M^* \|_F \le
       \max \left( \frac{ C_{1 \alpha} r \sigma_2(G)}{\sigma_1(G)} ,  
                \frac{C_{2 \alpha} m \sqrt{r^3 n}}{\sigma_1^2(G)}  \right) \label{thm_eq1} \\
    & \le  \max \left( \frac{ C_{1 \alpha} C r \sqrt{m}}{\sqrt{|\Omega|}},  
                \frac{C_{2 \alpha} m^3 \sqrt{r^3 n}}{|\Omega|^2}  \right)\,, \label{thm_eq2}
\end{align}
provided $\gamma_{\alpha} >0 $. Here, $C_1, C_2 > 0$ are universal constants, $C > 0$ is given by assumption (A2), 
and 
\[C_{1\alpha} \equiv 4\sqrt{2}\alpha\,, \quad C_{2\alpha} \equiv 32.16\sqrt{2} L_{\alpha}/\gamma_\alpha\,,\]
with $\gamma_{\alpha}$ and $L_{2 \alpha}$ given by (\ref{geq104}), (\ref{geq108}).
\end{thm}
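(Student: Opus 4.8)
The plan is to combine the global optimality of $\widehat M$ with the local strong convexity of the negative log-likelihood, and then transfer the resulting bound on the \emph{observed} error energy $\sum_{(i,j)\in\Omega}\Delta_{ij}^2$ (where $\Delta := \widehat M - M^*$) to the full Frobenius norm via the spectral-gap assumptions (A1)--(A2). First I would write down the basic inequality. Since $M^*$ is feasible for \eqref{e1.3} and $\widehat M$ is a global minimizer, $F_{\Omega,Y}(\widehat M)\le F_{\Omega,Y}(M^*)$. A second-order Taylor expansion of the separable function $F_{\Omega,Y}$ about $M^*$ along $\Delta$, using that every intermediate point stays in $\{|x|\le\alpha\}$ (both $\widehat M$ and $M^*$ obey $\|\cdot\|_\infty\le\alpha$) and that the per-entry curvature of $-\log f$ and $-\log(1-f)$ is at least $\gamma_\alpha$ on $[-\alpha,\alpha]$ by \eqref{geq104}, gives
\[
\frac{\gamma_\alpha}{2}\sum_{(i,j)\in\Omega}\Delta_{ij}^2 \;\le\; -\langle \nabla F_{\Omega,Y}(M^*),\,\Delta\rangle \;\le\; |\langle S,\Delta\rangle|,
\]
where $S:=\nabla F_{\Omega,Y}(M^*)$ is supported on $\Omega$.

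Next I would control the stochastic gradient term. A direct computation shows $\mathbb{E}[S_{ij}]=0$ and $|S_{ij}|\le L_\alpha$ for $(i,j)\in\Omega$, which is precisely what \eqref{geq108} is designed to capture. Since $\Delta=\widehat M-M^*$ has rank at most $2r$ and $\|\Delta\|_\infty\le 2\alpha$, I would bound $|\langle S,\Delta\rangle|$ uniformly over the feasible low-rank directions, e.g.\ through the duality $|\langle S,\Delta\rangle|\le\|S\|_2\|\Delta\|_*\le\sqrt{2r}\,\|S\|_2\|\Delta\|_F$ together with a high-probability bound on the operator norm of the mean-zero, entrywise-bounded random matrix $S$ (or, equivalently, a Hoeffding plus covering/union-bound argument over the set of rank-$2r$ matrices with $\|\cdot\|_\infty\le 2\alpha$). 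This is the step that produces the $\exp(-C_2 m)$ failure probability and, after substitution, the second term $C_{2\alpha}m\sqrt{r^3 n}/\sigma_1^2(G)$ in \eqref{thm_eq1}, whose leading constant carries the factor $L_\alpha/\gamma_\alpha$.

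I would then establish the deterministic ``restricted isometry'' furnished by the spectral gap. Writing $G=\frac{\sigma_1(G)}{\sqrt{mn}}\mathbf{1}_m\mathbf{1}_n^{\sf T}+G_\perp$ with $\|G_\perp\|_2=\sigma_2(G)$ (assumption (A1) identifies the top singular triple), we have $\sum_{(i,j)\in\Omega}\Delta_{ij}^2=\langle G,\Delta\circ\Delta\rangle=\frac{\sigma_1(G)}{\sqrt{mn}}\|\Delta\|_F^2+\langle G_\perp,\Delta\circ\Delta\rangle$. Since $\Delta\circ\Delta$ has rank at most $(2r)^2$ and $\|\Delta\circ\Delta\|_F\le\|\Delta\|_\infty\|\Delta\|_F\le 2\alpha\|\Delta\|_F$, the remainder obeys $|\langle G_\perp,\Delta\circ\Delta\rangle|\le \sigma_2(G)\|\Delta\circ\Delta\|_*\le 4\alpha r\,\sigma_2(G)\|\Delta\|_F$, so that
\[
\sum_{(i,j)\in\Omega}\Delta_{ij}^2\;\ge\;\frac{\sigma_1(G)}{\sqrt{mn}}\|\Delta\|_F^2-4\alpha r\,\sigma_2(G)\|\Delta\|_F .
\]
This is the source of the first (spectral-gap) term, with constant $C_{1\alpha}\propto\alpha$.

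Finally I would chain the two estimates: inserting this lower bound into the left-hand side of the basic inequality and the gradient bound into the right-hand side gives, after dividing by $\|\Delta\|_F$ and normalizing by $\sqrt{mn}$, an inequality of the form $a\|\Delta\|_F^2\le(b+c)\|\Delta\|_F$ whose solution $\|\Delta\|_F\le (b+c)/a\le 2\max(b,c)/a$ is exactly the maximum of the two competing error scales in \eqref{thm_eq1}; substituting $\sigma_2(G)\le C\sqrt d$, $\sigma_1(G)=d\sqrt{m/n}\ge d$, $|\Omega|=md$ and $n\le m$ then yields \eqref{thm_eq2}. I expect the main obstacle to be the second step: obtaining a sufficiently sharp, \emph{uniform} high-probability control of $\langle S,\Delta\rangle$ over all feasible rank-$2r$, $\|\cdot\|_\infty\le 2\alpha$ directions---with the correct dependence on $r$, the dimensions, and $\sigma_1(G)$, and with an $\exp(-C_2 m)$ tail---while keeping the spectral-gap remainder in the restricted-isometry step of a matching order, so that the two error terms balance to give precisely the stated bound.
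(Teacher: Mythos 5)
Your proposal is correct, and it follows the paper's overall architecture: the basic inequality $F_{\Omega,Y}(\widehat M)\le F_{\Omega,Y}(M^*)$, a second-order Taylor expansion with per-entry curvature lower bound $\gamma_\alpha$, control of $\langle \nabla F_{\Omega,Y}(M^*),\Delta\rangle$ via trace duality plus an operator-norm concentration bound for the mean-zero, entrywise-bounded gradient matrix (the paper invokes Chatterjee's Theorem 8.4 here, which supplies the $\exp(-C_2m)$ tail), and a spectral-gap transfer from $\|\Delta_\Omega\|_F$ to $\|\Delta\|_F$. Where you genuinely diverge is the restricted-isometry step. The paper (Lemmas \ref{lemma:partial} and \ref{lemma:hpartial}) bounds $\bigl\|\tfrac{\sqrt{mn}}{\sigma_1(G)}R_\Omega(\Delta)-\Delta\bigr\|_2$ via a rank-one decomposition $\Delta=UV^\sT$ and the singular-vector structure of $G$, then converts between operator and Frobenius norms at the cost of $\sqrt{2r}$ factors, arriving at $\|\Delta_\Omega\|_F\ge \tfrac{\sigma_1(G)}{\sqrt{2rmn}}\|\Delta\|_F-2\alpha\sqrt{r}\,\sigma_2(G)$. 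You instead write $\|\Delta_\Omega\|_F^2=\langle G,\Delta\circ\Delta\rangle$ and apply trace duality to the rank-$(\le 4r^2)$ matrix $\Delta\circ\Delta$ against $G_\perp$; this is cleaner and quantitatively sharper, giving a restricted-isometry constant of order $\sigma_1(G)/\sqrt{mn}=p$ rather than the paper's effective $\sigma_1^2(G)/(8rmn)\sim p^2/r$, so your second error term $O\!\left(L_\alpha\sqrt{rm}/(\gamma_\alpha\sigma_1(G))\right)$ improves on the stated $C_{2\alpha}m\sqrt{r^3n}/\sigma_1^2(G)$ by a factor of $r/p$ and in particular implies \eqref{thm_eq1}. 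Two minor remarks: your final step $b+c\le 2\max(b,c)$ yields $8\alpha$ rather than $C_{1\alpha}=4\sqrt{2}\alpha$ on the first term, which is recovered by splitting into the two cases $\|\Delta\|_F<2\eta$ and $\|\Delta\|_F\ge 2\eta$ as the paper does; and the step you flag as the main obstacle (uniform control of $\langle S,\Delta\rangle$) is actually immediate, since $|\langle S,\Delta\rangle|\le \|S\|_2\|\Delta\|_*\le\sqrt{2r}\,\|S\|_2\|\Delta\|_F$ is already uniform over all rank-$2r$ directions once the single event $\|S\|_2\le 2.01 L_\alpha\sqrt{m}$ holds, so no covering argument is needed.
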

Proof of this theorem is given in Sec.\ \ref{sec:proof}.
Of particular interest is the case where $p=\frac{|\Omega|}{mn}$ is fixed and we let $m$ and $n$ become large, with $m/n \equiv \delta \ge 1$ fixed. In this case we have the following Corollary. 
\begin{coro} \label{cor:main_gd} Assume the conditions of Theorem \ref{thm:main_gd}. Let $p=\frac{|\Omega|}{m n}$ be fixed independent of $m$ and $n$.  Then with probability at least 
$1 -  C_1 \exp (-C_2 m) $,
any global minimum $\widehat{M}$ to (\ref{e1.3}) satisfies
\begin{align}  
     \frac{1}{\sqrt{mn}} &  \| \widehat{M} - M^* \|_F  
            \le \mathcal{O} \left( \frac{\delta}{p^2} \sqrt{\frac{r^3}{n}} \right) .
                               \label{geq232}
\end{align}
\end{coro}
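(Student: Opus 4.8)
The plan is to specialize the dimension-explicit bound \eqref{thm_eq2} of Theorem \ref{thm:main_gd} to the regime $|\Omega| = p\,mn$ with $p$ fixed and $m = \delta n$, and then to determine which of the two terms inside the maximum governs the rate. First I would substitute $|\Omega| = p\,mn$ into the first term of \eqref{thm_eq2}, which cancels the $\sqrt{m}$ factor:
\begin{equation*}
   \frac{C_{1\alpha} C r \sqrt{m}}{\sqrt{|\Omega|}} = \frac{C_{1\alpha} C r \sqrt{m}}{\sqrt{p\,m\,n}} = \frac{C_{1\alpha} C r}{\sqrt{p\,n}}.
\end{equation*}
For the second term, substituting $|\Omega| = p\,mn$ and then $m = \delta n$ gives
\begin{equation*}
   \frac{C_{2\alpha} m^3 \sqrt{r^3 n}}{|\Omega|^2} = \frac{C_{2\alpha}\, m \sqrt{r^3 n}}{p^2 n^2} = \frac{C_{2\alpha}\,\delta}{p^2}\sqrt{\frac{r^3}{n}},
\end{equation*}
which already matches the claimed rate up to the universal factor $C_{2\alpha}$; this term is thus exactly responsible for the bound \eqref{geq232}.

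The only genuine step beyond this algebra is verifying that the maximum in \eqref{thm_eq2} is controlled, up to constants, by this second term rather than the first. Both simplified terms scale as $n^{-1/2}$, so neither dominates asymptotically in $n$; the comparison instead reduces to the fixed parameters $p,\delta,r$. Extracting the common $n^{-1/2}$ factor, the ratio of the second term to the first is a universal constant times $\delta\, r^{1/2} / p^{3/2}$. Since $p = |\Omega|/(mn) \le 1$, while $\delta \ge 1$ and $r \ge 1$ by hypothesis, this ratio is bounded below by a positive constant, so the first term is at most a constant multiple of the second. Hence the maximum is $\mathcal{O}\!\left(\frac{\delta}{p^2}\sqrt{r^3/n}\right)$, and absorbing $C_{2\alpha}$ together with the comparison constant into the $\mathcal{O}(\cdot)$ notation yields \eqref{geq232}.

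I do not anticipate a real obstacle: the statement is a direct corollary of Theorem \ref{thm:main_gd}, and the \emph{hard part} is merely the bookkeeping needed to confirm that the second term dominates under the standing constraints $p \le 1$, $\delta \ge 1$, $r \ge 1$. The high-probability guarantee and all structural assumptions (A1), (A2) are inherited verbatim from the theorem, so no new probabilistic argument is required.
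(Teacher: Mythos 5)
Your proposal is correct and follows exactly the route the paper intends: the corollary is obtained by substituting $|\Omega| = p\,mn$ and $m=\delta n$ into \eqref{thm_eq2}, and your check that the second term dominates the maximum (ratio $\asymp \delta r^{1/2}/p^{3/2}\ge$ const.\ since $p\le 1$, $\delta\ge 1$, $r\ge 1$) is the only substantive step, which the paper leaves implicit. No gaps.
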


\subsection{Comparison with previous work} \label{sec:comp}
Consider $M^* \in \reals^{n\times n}$, with $p$ fraction of its entries
sampled, such that $\|M^*\|_\infty \leq \alpha$ (also assumed in \cite{Davenport12,Cai13}) and $\text{rank}(M^*) \leq r$. Then 
$m =n$, and $| \Omega |= p n^2$. 
The bounds proposed in \cite{Davenport12} (and \cite{Cai13} in case of uniform sampling) yields
\begin{equation} \label{geq81}
  \frac{1}{n^2} \|\widehat{M} - M^*\|_F^2 \le  \mathcal{O} \left( \sqrt{\frac{r}{pn}} \right)\,,
\end{equation}
whereas, applying our result (Corollary 3.2), we obtain
\begin{align} 
  \frac{1}{n^2} \|\widehat{M} - M^*\|_F^2 & 
              \le \mathcal{O} \left(  \frac{r^3 }{p^4 n}  \right)\,.  \label{geq81b}
\end{align} 
Comparing (\ref{geq81}) and (\ref{geq81b}), we see our method has faster convergence rate in $n$ for fixed rank $r$ and fraction of revealed entries $p$. Notice that if the number of missing entries scales with $n$ according to $ p \sim \Theta(1/n)$, \cite{Davenport12} yields bounded error while our bound grows with $n$; in our case we need $ p $ to be of order at least $n^{-1/4}$.  We believe this to be an artifact of our proof, as our numerical results (Fig.\ \ref{fig:sim2}) show our method outperforms \cite{Davenport12}, especially for low values of $p$ and higher values of rank $r$.

\section{OPTIMIZATION} \label{sec:alg}
We will solve the optimization problem \eqref{e1.3} using a log-barrier penalty function approach \cite[Sec.\ 11.2]{Boyd2004}.  The constraint $\max_{i,j}|M_{ij}| \le \alpha$ translates to the log-barrier penalty function $- \log \left( 1- (M_{ij}/\alpha)^2 \right)$.
This leads to the regularized objective function
\begin{equation} \label{e4.2}
   \overline{F}_{\Omega, Y}(M) = F_{\Omega, Y}(M) - \lambda \sum_{(i,j)} \log \left( 1- (M_{ij}/\alpha)^2 \right) 
\end{equation}
\vspace{-0.01in}
and the optimization problem
\begin{equation} \label{e4.3}
   \widehat{M} = \arg\min_{M} \overline{F}_{\Omega, Y}(M) \; \mbox{subject to} \;  {\rm rank}(M) \le r .
\end{equation}
\vspace{-0.01in}
We can account for the rank constraint in \eqref{e4.3} via the factorization technique of \cite{Bach08, Burer03, Recht13} where instead of optimizing with respect to $M$ in \eqref{e1.3}, $M$ is factorized into two matrices $U \in \mathbb{R}^{m \times k}$ and $V \in \mathbb{R}^{n \times k}$ such that $M = UV^\top$.  One then chooses $k = r+1$ and optimizes  with respect to the factors $U,V$. The reformulated objective function is then given by 
\begin{align} 
 \check{F}_{\Omega, Y}(U,V)  = F_{\Omega, Y}(UV^\top) 
       - \lambda \sum_{(i,j)} 
         \log \left( 1- (U_{i,\cdot}V_{j,\cdot}^\sT / \alpha )^2 \right) \label{e4.4}
\end{align}
where $U_{i,\cdot}$ denotes the $i\mbox{-}$th row of $U$, and $V_{j,\cdot}$ the $j\mbox{-}$th row of $V$. The parameter $\lambda > 0$ sets the accuracy of approximation of $\max_{i,j}|M_{ij}| \le \alpha$ via the log-barrier function. We solve this factored version using a gradient descent method with backtracking line search, in a sequence of central path following solutions \cite[Sec.\ 11.2]{Boyd2004}, where one gradually reduces $\lambda$ toward 0. Initial values of $U,V$ are randomly picked and scaled to satisfy $\|UV^\top\|_\infty \le 0.95 \alpha$. Starting with a large $\lambda_0$, we solve for $\lambda = \lambda_0, \lambda_0/2, \lambda_0/4, \cdots $ via central path following and use 5-fold cross validation error over $\lambda$ as the stopping criterion in selecting $\lambda$.  
%

%
\begin{figure}[h!]
        \centering
                \hspace*{0.4in}\includegraphics[width=3.1in]{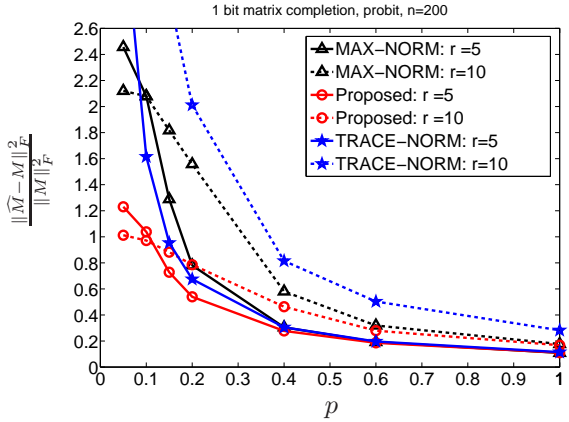}
                 \put(-110,0){$p$}
                \put(-230,70){\rotatebox{90}{{\small $\frac{\|\widehat{M}-M\|_F^2}{\|M\|_F^2}$}}}
                       \caption{ Relative MSE $\|\widehat{M}-M^*\|_F^2/ \|M^*\|_F^2$ for varied values of $p$. Probit model, rank $r$, $\sigma = 0.18$, $n=200$, $\alpha=1$. 
         ``trace-norm'' refers to \cite{Davenport12}, ``max-norm" is the method of \cite{Cai13} for known $r$.}
                \label{fig:sim2}
\end{figure}

\vspace*{-0.25in}
\begin{remark} \label{rem2} The hard rank constraint results in a nonconvex constraint set. Thus, \eqref{e1.3} and \eqref{e4.3} are nonconvex optimization problems; similarly for minimization of (\ref{e4.4}) for which the rank constraint is implicit in the factorization of $M$.  However, the following result is shown in \cite[Proposition 4]{Bach08}, based on \cite{Burer03}, for nonconvex problems of this form.  If $(U^*, V^*)$ is a local minimum of the factorized problem, then $\widetilde{M} = U^* {V^*}^\top$ is the global minimum of problem~\eqref{e4.3}, so long as $U^*$ and $V^*$ are rank-deficient. (Rank deficiency is a sufficient condition, not necessary.) This result is utilized in \cite{Recht13} and \cite{Cai13} for problems of this form.  
\end{remark}

\section{NUMERICAL EXPERIMENTS}\label{sec:simulation}
\subsection{Synthetic Data}
In this section, we test our method on synthetic data and compare it with the methods of \cite{Davenport12, Cai13}. We set $m=n$ and construct $M^*\in \reals^{n\times n}$ as $M^* = M_1 M_2^\top$ where $M_1$ and $M_2$ are $n \times r$ matrices with i.i.d.\ entries drawn from a uniform distribution on $[-0.5,0.5]$ (as in \cite{Cai13, Davenport12}). Then we scale $M^*$ to achieve $\| M^*\|_\infty = 1 = \alpha$. We pick $r=5, 10$, vary matrix sizes $n=100,200,$ or 400. We generate the set $\Omega$ of revealed indices via the Bernoulli sampling model of \cite{Davenport12} with $p$ fraction of revealed entries. We consider the probit ($\sigma = 0.18$, as in \cite{Cai13, Davenport12}) model. For Fig.~\ref{fig:sim2}, we take $n = 200$ and vary $p$. The resulting relative mean-square error (MSE) $\|\widehat{M}-M^*\|_F^2/ \|M^*\|_F^2$ averaged over 20 Monte Carlo runs is shown in Fig.~\ref{fig:sim2}. As expected, the performance improves with increasing $p$.  For comparison, we have also implemented the methods of \cite{Davenport12, Cai13}, labeled ``trace norm'', and ``max-norm" respectively. 
%
%
As we can see our proposed approach significantly outperforms \cite{Davenport12, Cai13}, especially for low values of $p$ and high values of $r$. 
\begin{figure}[h!]
\hspace{2mm}
                \hspace*{0.2in}\includegraphics[width=3in]{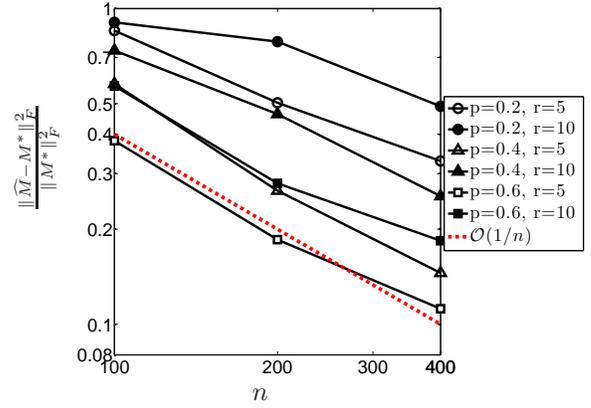}
                \put(-135,2){$n$}
                \put(-225,73){\rotatebox{90}{{\small $\frac{\|\widehat{M}-M^*\|_F^2}{\|M^*\|_F^2}$}}}
                \vspace*{-0.03in}
                \caption{ Log-log plots of relative MSE for varied $n$}
                \label{fig:sim3}
\end{figure}
\vspace{-0.25in}
\begin{figure}[h!]
\hspace{3mm}
    \includegraphics[width=0.35\textwidth]{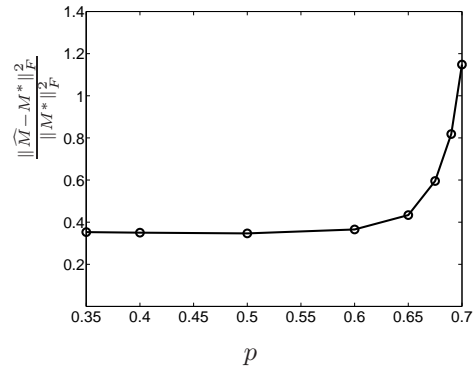}
    		\put(-100,-5){$p$}
                \put(-187,70){\rotatebox{90}{{\small $\frac{\|\widehat{M}-M^*\|_F^2}{\|M^*\|_F^2}$}}}
                \caption{Relative MSE versus $p$ under the probit model, for varied p, fixed $p+q = 0.7$}
                \label{fig:sim4}
\end{figure}
\vspace{-0.2in}

In Fig.\ \ref{fig:sim3} we show the relative MSE for $n=100,200,400$, $p=0.2, 0.4, 0.6$ for the probit model using our approach.
We also plot the line $1/n$ in Fig.\ \ref{fig:sim3} to show the scale of the upper bound $\mathcal{O} \left(  \frac{r^3 }{p^4 n}  \right)$ established in Section~\ref{sec:main}.  As we can see, the empirical estimation errors follow approximately the same scaling, suggesting that our analysis is tight, up to a constant. 

We additionally plot the MSE for $n = 200$ and $r = 5$ in Fig.\ \ref{fig:sim4}, with varying $p$ and keeping $p + q = 0.7$, under the probit model.  This enables us to study the performance of the model under nonuniform sampling.  
Note that when $p = q = 0.35$, the spectral gap is largest and MSE is the smallest, and as $p$ gets larger, the spectral gap decreases, leading to larger MSE.
\begin{table}[h!]
{\small
\begin{center}
\renewcommand{\arraystretch}{1.0}
 \begin{tabular}{||c||c|c|c||}
   \hline \hline
    & \multicolumn{3}{|c||}{\% Accuracy (Logit Model)} \\ 
    \% Training & Proposed  & Trace-norm  & Max-norm \\ \hline\hline
   95 & 72.3$\pm$0.7 & 72.4$\pm$0.6  & 71.5$\pm$0.7 \\ \hline
   10 & 60.4$\pm$0.6 & 58.5$\pm$0.5  & 58.4$\pm$0.6 \\ \hline
   5 & 53.7$\pm$0.8 & 49.2$\pm$0.7  & 50.3$\pm$0.2 \\ \hline \hline
 \end{tabular}
 \caption{Accuracy of the proposed, trace-norm \cite{Davenport12} and max-norm \cite{Cai13} approaches on the MovieLens 100k dataset for different amounts of training data.  Accuracy represents the percentage of test set ratings for which the estimate of $M^*$ accurately predicts the sign, i.e., whether the unobserved ratings were above or below the average rating.}
 \label{table1}
\end{center}
}
\end{table}

\subsection{MovieLens (100k) Dataset}
As in \cite{Davenport12}, we consider the MovieLens (100k) dataset (http://www.grouplens.org/node/73). This dataset consists of 100,000 movie ratings from 943 users on 1682 movies, with ratings on a scale from 1 to 5. Following \cite{Davenport12}, these ratings were converted to binary observations by comparing each rating to the average rating for the entire dataset.  We used three splits of the data into training/test subsets and used 20 random realizations of these splits.  The performance is evaluated by checking to see if the estimate of $M^*$ accurately predicts the sign of the test set ratings (whether the observed ratings were above or below the average rating). As in \cite{Davenport12}, we determine the needed parameter values by performing a grid search and selecting the values that lead to the best performance; we fixed $\alpha =1$, and varied $\lambda$ (i.e. central path following), $\sigma$ and rank $r$. Our performance results are shown in Table \ref{table1} using a logistic model for three approaches: proposed, \cite{Davenport12, Cai13}. These results support our findings on synthetic data that our method is preferable over \cite{Davenport12, Cai13} for sparser data.

\section{PROOF OF THEOREM \ref{thm:main_gd} } \label{sec:proof}
Our proof is based on a second-order Taylor series expansion and a matrix concentration inequality.

Let $\theta = {\rm vec} (M) \in \reals^{mn}$ and $\tilde{F}_{\Omega, Y}(\theta) = F_{\Omega, Y}(M)$. The objective function $F_{\Omega, Y}(M)$ is continuous in $M$ and the set ${\mathcal C}$ is compact, therefore, $F_{\Omega, Y}(M)$ achieves a minimum in ${\mathcal C}$. If $\widehat{\theta} = {\rm vec}(\widehat{M})$ minimizes $\tilde{F}_{\Omega, Y}(\theta)$ subject to the constraints, then $\tilde{F}_{\Omega, Y}(\widehat{\theta}) \le \tilde{F}_{\Omega, Y}(\theta^*)$ where $\theta^* = {\rm vec} (M^*)$. By the second-order Taylor's theorem, expanding around $\theta^*$ we have
\begin{align}  
    \tilde{F}_{\Omega, Y}(\theta) = & \tilde{F}_{\Omega, Y}(\theta^*) + 
             \langle \nabla_\theta \tilde{F}_{\Omega, Y}(\theta^*),\theta-\theta^* \rangle \nonumber \\
            &  + \frac{1}{2} \langle  \theta-\theta^*, \left( \nabla^2_{\theta  \theta} \tilde{F}_{\Omega, Y}(\tilde{\theta}) \right) 
              (\theta-\theta^*) \rangle  \label{geq3.0}
\end{align}
where $\tilde{\theta} = \theta^* + \gamma (\theta-\theta^*)$ for some $\gamma \in [0,1]$, with corresponding matrix $\tilde{M} = M^* + \gamma (M-M^*)$. 
We need several auxiliary results before we can prove Theorem \ref{thm:main_gd}. 

Using (\ref{e1.2}), it follows that
\begin{align} 
  \frac{\partial F_{\Omega, Y}(M)}{\partial M_{\ell k}} & 
  = \left( - \frac{\dot{f}(M_{\ell k})}{f(M_{\ell k})} \ind_{(Y_{\ell k} = 1)} \right. \nonumber \\
        & \left.  + \frac{\dot{f}(M_{\ell k})}{1-f(M_{\ell k})} \ind_{(Y_{\ell k} = -1)} \right)  \ind_{((\ell, k) \in \Omega)}  ,   
                \label{geq3.1}
\end{align}
\begin{align} 
 & \frac{\partial^2  F_{\Omega, Y}(M)}{\partial M_{\ell k}^2}  
      =  \bigg[ \left( \frac{\dot{f}^2(M_{\ell k})}{f^2(M_{\ell k})} 
            - \frac{\ddot{f}(M_{\ell k})}{f(M_{\ell k})} \right) \ind_{(Y_{\ell k} = 1)} 
            \nonumber \\
       &  + \left( \frac{\ddot{f}(M_{\ell k})}{1-f(M_{\ell k})} + \frac{\dot{f}^2(M_{\ell k})}{(1-f(M_{\ell k}))^2} \right)
                   \ind_{(Y_{\ell k} = -1)} \bigg]\,  \ind_{((\ell, k) \in \Omega)}
                 \label{geq3.2}
\end{align}
and
\begin{equation} \label{geq3.3}
  \frac{\partial^2 F_{\Omega, Y}(M)}{\partial M_{\ell_1 k_1} \partial M_{\ell_2 k_2}} = 
               0  \mbox{ if } (\ell_1, k_1) \neq (\ell_2, k_2) .
\end{equation}

Let $w \equiv {\rm vec}(M-M^*) = \theta - \theta^*$. 
Note that by our notation,
\[
    \nabla_\theta \tilde{F}_{\Omega, Y}(\theta^*) 
           = {\rm vec} \left( \frac{\partial F_{\Omega, Y}(M^*)}{\partial M_{\ell k}}  \right)\,.
\]
We then have
\begin{equation} \label{geq3.10}
  \langle \nabla_{\theta} \tilde{F}_{\Omega, Y}(\theta^*),w \rangle 
    = \langle \nabla_M F_{\Omega,Y}(M^*), M-M^* \rangle
\end{equation}
where $\langle A,B \rangle := {\rm tr}(A^\top B )$.
Let $Z\equiv \nabla_M F_{\Omega,Y}(M^*)$. Therefore,
\begin{align*} 
 Z_{ij} 
    = \left( - \frac{\dot{f}(M_{ij})}{f(M_{ij})} \ind_{(Y_{ij} = 1)} 
      + \frac{\dot{f}(M_{ij})}{1-f(M_{ij})} \ind_{(Y_{ij} = -1)} \right)  
               \ind_{((i,j) \in \Omega)}\,.
\end{align*}
Using (\ref{e1.1a}) and (\ref{geq108}), we have
\begin{equation} \label{geq3.15}
  \mathbb{E} [Z_{ij}]=0, \; |Z_{ij}| \le L_{\alpha} \; \implies \; \mathbb{E} [Z_{ij}^2] \le  L_{\alpha}^2   \, .
\end{equation}

We need the following result from \cite{Chatterjee12} concerning spectral norms of random matrices for Lemma \ref{lemma:grad}.
\begin{lemma} \label{lemma:Chatterjee} \cite[Theorem 8.4]{Chatterjee12}
Take any two numbers $m$ and $n$ such that $1 \le n\le m$. Suppose that $A= [A_{ij}]_{1\le i \le m, 1\le j \le n}$ is a matrix whose entries 
are independent random variables that satisfy, for some $\sigma^2 \in [0,1]$,
\[
    \mathbb{E} [ A_{ij} ] = 0, \; \mathbb{E}[A_{ij}^2 ] \le \sigma^2, \mbox{ and } |A_{ij}| \le 1 \; a.s.
\]
Suppose that $\sigma^2 \ge m^{-1+\varepsilon}$ for some $\varepsilon >0$. Then
\[
    P \left( \|A\|_2 \ge 2.01 \sigma \sqrt{m} \right) \le C_1(\varepsilon) e^{-C_2 \sigma^2 m} ,
\]
where $C_1(\varepsilon)$ is a constant that depends only on $\varepsilon$ and $C_2$ is a positive universal constant.
The same result is true when $m=n$ and $A$ is symmetric or skew-symmetric, with independent entries on and above the diagonal, all other assumptions remaining the same. Lastly, all results remain true if the assumption $\sigma^2 \ge m^{-1+\varepsilon}$ is changed to $\sigma^2 \ge m^{-1} ( \log(m) )^{6+\varepsilon}$.
\end{lemma}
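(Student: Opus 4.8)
The plan is to decouple the problem into a sharp bound on the \emph{expected} operator norm and a dimension-free \emph{concentration} of $\|A\|_2$ about its mean; the factor $2.01$ and the exponent $\sigma^2 m$ are then produced by pushing the mean just below $2\sigma\sqrt{m}$ and spending only a vanishing fraction of $\sigma\sqrt{m}$ on the deviation. Writing $\|A\|_2 = \sup_{u \in S^{m-1},\, v \in S^{n-1}} u^\top A v$, I would first record that, viewed as a function of the $mn$ independent entries, $\|A\|_2$ is convex (a supremum of linear forms) and $1$-Lipschitz in the Frobenius/Euclidean metric, since $|\,\|A\|_2 - \|B\|_2\,| \le \|A-B\|_2 \le \|A-B\|_F$. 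These two structural facts drive the two halves of the argument.

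Step 1 (expected norm, sharp constant). The target is $\mathbb{E}\|A\|_2 \le 2\sigma\sqrt{m}\,(1+o(1))$, with the $o(1)$ controlled uniformly once $\sigma^2 \ge m^{-1+\varepsilon}$. A union bound over an $\epsilon$-net of $S^{m-1}\times S^{n-1}$ (cardinality $e^{c(m+n)}$) only yields $\|A\|_2 \le C\sigma\sqrt m$ for some larger universal $C$, losing the sharp constant, so I would instead compare $A$ entrywise to a Gaussian matrix $G$ with matching variances via a Lindeberg-type swap, bounding the replacement error through the boundedness $|A_{ij}|\le 1$ and a smooth surrogate for $\|\cdot\|_2$. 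For the Gaussian model, Gordon's (or Slepian's) comparison inequality gives $\mathbb{E}\|G\|_2 \le \sigma(\sqrt m + \sqrt n) \le 2\sigma\sqrt m$ because $n \le m$, and the swap error is lower order relative to $\sigma\sqrt m$ precisely when $\sigma\sqrt m \ge m^{\varepsilon/2}$ diverges. Hence for all $m$ beyond a threshold $m_0(\varepsilon)$ we obtain $\mathbb{E}\|A\|_2 \le 2.005\,\sigma\sqrt m$.

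Step 2 (concentration, dimension-free). Since $\|A\|_2$ is convex and $1$-Lipschitz and the entries are independent with $|A_{ij}|\le 1$, Talagrand's convex concentration inequality yields $P\big(\|A\|_2 \ge \mathbb{M}\|A\|_2 + s\big) \le C e^{-c s^2}$ with $C,c$ universal and \emph{no} dependence on $mn$ — this dimension-freeness is exactly what converts a deviation of size $\sigma\sqrt m$ into an exponent of order $\sigma^2 m$. After relating the median $\mathbb{M}\|A\|_2$ to $\mathbb{E}\|A\|_2$ (they differ by $O(1)$, negligible against $\sigma\sqrt m$), I would take $s = 0.005\,\sigma\sqrt m$, so that $\mathbb{M}\|A\|_2 + s \le 2.01\,\sigma\sqrt m$ and therefore $P(\|A\|_2 \ge 2.01\sigma\sqrt m) \le C e^{-c(0.005)^2 \sigma^2 m} = C_1 e^{-C_2 \sigma^2 m}$. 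The symmetric and skew-symmetric square cases go through identically, the only change being that Step 1 invokes the Wigner-type estimate $\mathbb{E}\|A\|_2 \le 2\sigma\sqrt m(1+o(1))$ instead; the weaker hypothesis $\sigma^2 \ge m^{-1}(\log m)^{6+\varepsilon}$ is accommodated by sharpening the $o(1)$ control in Step 1 so that the polylogarithmic correction is still dominated by $2\sigma\sqrt m$.

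The main obstacle is Step 1: extracting the \emph{sharp} leading constant $2$ (rather than some larger universal $C$) in $\mathbb{E}\|A\|_2$ for arbitrary bounded, non-identically-distributed entries. Concentration in Step 2 is essentially automatic once the convex-Lipschitz structure is recognized and Talagrand's inequality is invoked; the delicate work lies in controlling the Gaussian-comparison swap error uniformly and in verifying that, under $\sigma^2 \ge m^{-1+\varepsilon}$, every lower-order term is $o(\sigma\sqrt m)$, so that the excess budget $0.005\,\sigma\sqrt m$ suffices to reach the stated $2.01\,\sigma\sqrt m$ threshold with room to spare — the leftover slack and the small-$m$ cases being folded into the $\varepsilon$-dependent prefactor $C_1(\varepsilon)$.
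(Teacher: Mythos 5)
This lemma is not proved in the paper at all: it is imported verbatim as \cite[Theorem 8.4]{Chatterjee12}, so there is no in-paper argument to compare against. Your two-step architecture --- a sharp bound $\mathbb{E}\|A\|_2 \le (2+o(1))\sigma\sqrt{m}$ followed by dimension-free concentration of the convex, $1$-Lipschitz function $\|A\|_2$ via Talagrand's inequality, with the median--mean gap absorbed because $\sigma\sqrt{m}\to\infty$ under the hypothesis $\sigma^2 \ge m^{-1+\varepsilon}$ --- is exactly the architecture of the cited proof, and your Step 2 is correct and essentially complete as sketched.

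The genuine gap is in Step 1. You propose to reach the sharp constant $2$ by a Lindeberg swap from $A$ to a Gaussian matrix with matching variances, followed by Slepian/Gordon (Chevet), and you assert that the replacement error is $o(\sigma\sqrt{m})$ once $\sigma\sqrt{m}$ diverges. That assertion is not justified and the naive version of it fails: with only second moments matched, each single-entry swap contributes a third-order Taylor error of size roughly $\sup\|\partial^3 f\|\cdot\mathbb{E}|A_{ij}|^3 = O(\sigma^2)$ for any reasonable smooth surrogate $f$ of $\|\cdot\|_2$ (whose entrywise derivatives are $O(1)$, not small, since the norm is $1$-Lipschitz in each entry), so summing over $mn$ entries gives a total error of order $mn\sigma^2 \gg \sigma\sqrt{m}$. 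Making a Lindeberg-type comparison work at the spectral edge requires resolvent/local-law machinery (Tao--Vu, Erd\H{o}s--Yau), which is a substantial theory in its own right, not a lower-order-term check. The route that actually closes Step 1 under the stated hypotheses --- and the one used for the cited theorem --- is the combinatorial moment method of F\"uredi--Koml\'os as refined by Vu: bound $\mathbb{E}\,{\rm Tr}\,(A^\top A)^k \le m\,(2\sigma\sqrt{m})^{2k}(1+o(1))$ by counting closed walks in which each edge is traversed at least twice (boundedness $|A_{ij}|\le 1$ controls the higher-multiplicity edges, and the conditions $\sigma^2 \ge m^{-1+\varepsilon}$ or $\sigma^2 \ge m^{-1}(\log m)^{6+\varepsilon}$ are precisely what make the correction terms negligible for $k$ growing suitably), then apply Markov; the rectangular and symmetric/skew-symmetric cases are handled by the self-adjoint dilation. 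With Step 1 replaced by that argument, the rest of your proof goes through.
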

\begin{lemma} \label{lemma:grad}
Let $w \equiv  {\rm vec}(M-M^*)= \theta - \theta^*$, and $M, M^* \in {\mathcal C}$. 
Then with probability at least $1 - C_1(\varepsilon) \exp (-C_2 m)$,
we have
\begin{align*}
   \left| \langle \nabla_\theta \tilde{F}_{\Omega, Y}(\theta^*),w \rangle \right| & 
             \le        2.01 L_{\alpha} \sqrt{2 r m} \|M-M^*\|_F\,,
\end{align*}
where 
$\varepsilon \in (0,1)$, $C_1(\varepsilon)$ is a constant that depends only on $\varepsilon$ and $C_2$ is a positive universal constant.
\end{lemma}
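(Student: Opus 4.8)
The plan is to reduce the bound on $\langle \nabla_\theta \tilde F_{\Omega,Y}(\theta^*),w\rangle$ to a product of the operator norm of the gradient matrix $Z \equiv \nabla_M F_{\Omega,Y}(M^*)$ and the \emph{nuclear} norm of $M-M^*$, then exploit the low rank of the difference together with the matrix concentration inequality of Lemma~\ref{lemma:Chatterjee}. Using \eqref{geq3.10}, I would first rewrite the inner product as $\langle \nabla_\theta \tilde F_{\Omega,Y}(\theta^*),w\rangle = \langle Z, M-M^*\rangle = {\rm tr}\big(Z^\top (M-M^*)\big)$. By trace duality between the operator and nuclear norms (H\"older's inequality for Schatten norms),
\[
  \big|\langle Z, M-M^*\rangle\big| \;\le\; \|Z\|_2\,\|M-M^*\|_*,
\]
where $\|\cdot\|_*$ is the nuclear norm.

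Next I would exploit the rank constraint. Since $M,M^*\in{\mathcal C}$ each have rank at most $r$, the difference $M-M^*$ has rank at most $2r$, and therefore $\|M-M^*\|_* \le \sqrt{2r}\,\|M-M^*\|_F$ by the standard bound relating nuclear and Frobenius norms through the rank. This leaves only the task of controlling $\|Z\|_2$ with high probability.

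For that I would apply Lemma~\ref{lemma:Chatterjee} to the rescaled matrix $A := Z/L_\alpha$. Conditioned on the deterministic sampling pattern $\Omega$, the entries $Z_{ij}$ are independent functions of the independent observations $Y_{ij}$, they vanish for $(i,j)\notin\Omega$, and by \eqref{geq3.15} they satisfy $\mathbb{E}[Z_{ij}]=0$, $|Z_{ij}|\le L_\alpha$, and $\mathbb{E}[Z_{ij}^2]\le L_\alpha^2$. Hence the entries of $A$ are independent, mean zero, bounded by $1$ in absolute value, with variance at most $\sigma^2=1$. Taking $\sigma^2=1\ge m^{-1+\varepsilon}$ in Lemma~\ref{lemma:Chatterjee} gives $\|A\|_2\le 2.01\sqrt{m}$, i.e. $\|Z\|_2\le 2.01\,L_\alpha\sqrt{m}$, with probability at least $1-C_1(\varepsilon)e^{-C_2 m}$. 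Combining the three bounds yields
\[
  \big|\langle \nabla_\theta \tilde F_{\Omega,Y}(\theta^*),w\rangle\big| \;\le\; 2.01\,L_\alpha\,\sqrt{2rm}\,\|M-M^*\|_F,
\]
which is exactly the claimed estimate.

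The main obstacle I anticipate is the bookkeeping needed to apply Lemma~\ref{lemma:Chatterjee} correctly rather than any deep difficulty: one must verify the entrywise independence of $Z$ given the sampling set, confirm that after normalizing by $L_\alpha$ the boundedness and variance constants land at exactly $\sigma^2=1$ (so that the constant $2.01$ propagates cleanly), and check the dimension ordering $1\le n\le m$ required by the theorem. The rank-to-nuclear-norm reduction and the trace duality step are standard, so the crux is marshaling the concentration inequality with the correct normalization.
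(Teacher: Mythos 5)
Your proposal is correct and follows essentially the same route as the paper's own proof: trace duality to split off $\|Z\|_2$ and $\|M-M^*\|_*$, the rank-$2r$ bound $\|M-M^*\|_*\le\sqrt{2r}\,\|M-M^*\|_F$, and Lemma~\ref{lemma:Chatterjee} applied to $L_\alpha^{-1}Z$ with $\sigma^2=1$. No gaps; the bookkeeping you flag (independence of the entries given $\Omega$, the normalization, and $n\le m$) is exactly what the paper relies on.
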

\begin{proof} 
Using (\ref{geq3.10}), we have
\begin{align} 
 |\langle \nabla_{\theta} & \tilde{F}_{\Omega, Y}(\theta^*),w \rangle| 
  = | \langle \nabla_M F_{\Omega,Y}(M^*), M-M^* \rangle | \nonumber \\
  &  \le \|\nabla_M F_{\Omega,Y}(M^*) \|_2 \|M^*-M\|_* .  \label{geq3.20}
\end{align}
Let $\tilde{Z} \equiv  L_{ \alpha}^{-1} \nabla_M F_{\Omega,Y}(M^*)$. Then we have
$\mathbb{E}[ \tilde{Z}_{ij} ] = 0$, $|\tilde{Z}_{ij}| \le 1$ and 
$\mathbb{E}[ \tilde{Z}_{ij}^2 ] \le 1$. 
Applying Lemma \ref{lemma:Chatterjee} to $\tilde{Z}$ with $\sigma = 1$, we obtain $\|\tilde{Z} \|_2 \le 2.01 \sqrt{m}$ with probability at least $1-C_1(\varepsilon) \exp (-C_2 m)$ for some positive constants $C_1(\varepsilon)$ and $C_2$. W note that for any matrix $A$ of rank $r$, $\|A\|_* \le \sqrt{r} \|A\|_F$ with $\|A\|_*$ denoting the nuclear norm. Hence
$\|M^*-M\|_* \le \sqrt{2r} \|M^*-M\|_F$, yielding the desired result.
\end{proof}

\begin{lemma} \label{lemma:hess}
Let $w = {\rm vec}(M-M^*) = \theta - \theta^*$ and $M, M^* \in {\mathcal C}$. 
Then for any $\tilde{\theta} = \theta^* + \gamma (\theta - \theta^*)$ and any $\gamma \in [0,1]$, we have
\begin{align*}
    \langle w,  \left[ \nabla_{\theta \theta}^2 \tilde{F}_{\Omega, Y}(\tilde{\theta}) \right] w \rangle    
    & \ge  \gamma_{ \alpha} \left\| \left( M-M^* \right)_\Omega \right\|_F^2  .
\end{align*}
\end{lemma}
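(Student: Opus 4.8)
The plan is to exploit the structure already exposed in (\ref{geq3.2})--(\ref{geq3.3}): the Hessian $\nabla^2_{\theta\theta}\tilde F_{\Omega,Y}$ is a \emph{diagonal} matrix, since every mixed second partial vanishes by (\ref{geq3.3}) and the only surviving entries are the pure second derivatives $\partial^2 F/\partial M_{\ell k}^2$ on the diagonal. Consequently the quadratic form collapses to a weighted sum of squares,
\[
   \langle w, [\nabla^2_{\theta\theta}\tilde F_{\Omega,Y}(\tilde\theta)]\, w\rangle
   = \sum_{(\ell,k)} \frac{\partial^2 F_{\Omega,Y}(\tilde M)}{\partial M_{\ell k}^2}\, (M-M^*)_{\ell k}^2,
\]
and it suffices to bound each diagonal coefficient from below.

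Next I would dispose of the indices outside $\Omega$: the factor $\ind_{((\ell,k)\in\Omega)}$ in (\ref{geq3.2}) annihilates every term with $(\ell,k)\notin\Omega$, so only the entries of $(M-M^*)_\Omega$ survive and the target quantity $\|(M-M^*)_\Omega\|_F^2$ appears naturally. For $(\ell,k)\in\Omega$, the coefficient equals one of the two bracketed expressions in (\ref{geq3.2}), selected by the sign of $Y_{\ell k}$, each evaluated at $\tilde M_{\ell k}$. Comparing these two expressions directly with the two infima defining $\gamma_\alpha$ in (\ref{geq104}), I would argue that each coefficient is at least $\gamma_\alpha$ — provided the argument $\tilde M_{\ell k}$ lies in $[-\alpha,\alpha]$, which is exactly the range over which the infima in (\ref{geq104}) are taken.

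The one genuine point to verify, and the step I expect to be the crux, is precisely this feasibility of the interpolated point, namely that $\|\tilde M\|_\infty \le \alpha$. Since $\tilde M = M^* + \gamma(M-M^*) = (1-\gamma)M^* + \gamma M$ with $\gamma\in[0,1]$, it is a convex combination of $M^*$ and $M$, both of which lie in the entrywise $\alpha$-ball because $M, M^*\in\mathcal C$. Applying the triangle inequality entrywise gives $|\tilde M_{\ell k}| \le (1-\gamma)|M^*_{\ell k}| + \gamma|M_{\ell k}| \le \alpha$ for every $(\ell,k)$, so each nonzero diagonal coefficient is indeed controlled by the infima in (\ref{geq104}) and hence bounded below by $\gamma_\alpha$. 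Substituting this uniform lower bound into the sum of squares yields the claim. I note that the argument is entirely deterministic and requires no concentration inequality; the only subtlety is that (\ref{geq104}) is stated as an infimum over $|x|\le\alpha$, and it is the convex-combination observation that licenses evaluating it at the interpolated argument $\tilde M_{\ell k}$.
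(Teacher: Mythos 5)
Your proposal is correct and follows essentially the same route as the paper: diagonality of the Hessian from (\ref{geq3.3}), restriction to $\Omega$ via the indicator in (\ref{geq3.2}), and a termwise lower bound by $\gamma_\alpha$ using (\ref{geq104}). The only difference is that you explicitly verify $\|\tilde M\|_\infty \le \alpha$ via the convex-combination argument, a step the paper uses implicitly; this is a worthwhile clarification but not a different proof.
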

\begin{proof}
Using (\ref{geq104}), (\ref{geq3.2}) and (\ref{geq3.3}), we have
\begin{align} 
\langle w, & \left[ \nabla_{\theta \theta}^2 \tilde{F}_{\Omega, Y}(\tilde{\theta}) \right] w \rangle \nonumber \\
   & = \sum_{(i,j) \in \Omega} \left( \frac{\partial^2 F_{\Omega, Y}(\tilde{M})}{\partial M_{ij}^2} \right) (M_{ij}-M^*_{ij})^2 \nonumber \\
   & \ge \gamma_{\alpha} \sum_{(i,j) \in \Omega}  (M_{ij}-M^*_{ij})^2
      = \gamma_{\alpha} \left\| \left( M-M^* \right)_\Omega \right\|_F^2\,,  \label{geq3.41}
\end{align}
which completes the proof.
\end{proof}

We need a result similar to \cite[Theorem 4.1]{Bhojanapalli:2014kx} regarding closeness of a fixed matrix to its sampled version, which is proved therein for square matrices $M^*$ under an incoherence assumption on $M^*$.
In Lemma \ref{lemma:partial} we prove a similar result for rectangular $Z$ with bounded $\| Z \|_\infty$. Define 
\[\|Z\|_{\max} \equiv \inf\{\max(\|U\|_{2,\infty}^2,\|V\|_{2,\infty}^2):\, Z = UV^\sT\}\,,\]
where for a matrix $A$, $\|A\|_{2,\infty}$ denotes the largest $\ell_2$ norm of the rows in $A$ , i.e, $\|A\|_{2,\infty}\equiv \max_{i}\|U_{i,\cdot}\|_2$.

For $Z \in \reals^{m \times n}$, $m \ge n$, and define the operator
${\cal R}_\Omega$ as 
$$Z_\Omega \equiv {\cal R}_\Omega(Z)  = \begin{cases}
Z_{ij} &\mbox{ if } (i,j) \in \Omega,\\
0&\mbox{ otherwise. }
\end{cases}
$$

\begin{lemma} \label{lemma:partial} 
Let $G\backslash\Omega$ satisfy assumptions (A1) and (A2) in Section~\ref{sec:alg}.  
Let $Z \in \mathbb{R}^{m \times n}$ with rank$(Z) \le r$. Then we have
\begin{align}
  & \left\| \left( \frac{ \sqrt{mn} }{ \sigma_1 (G) } R_\Omega -I \right) (Z) \right\|_2 \leq  \frac{ \sqrt{mn} \sigma_2 (G) } 
                { \sigma_1 (G) } \| Z \|_{\max} \label{partiala} \\
      &  \leq  \frac{ \sqrt{rmn} \sigma_2 (G) } 
                { \sigma_1 (G) } \| Z \|_\infty
                  \leq C m \sqrt{ \frac{n r} { |\Omega|} } \| Z \|_\infty  .  \label{partialb}
\end{align}
\end{lemma}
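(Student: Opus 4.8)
The plan is to establish the three inequalities of \eqref{partiala}--\eqref{partialb} in turn, the first being the substantive one. The starting observation is that, by assumption (A1), the leading singular triple of $G$ is $(\sigma_1(G), \tilde{\mathbf{1}}_m, \tilde{\mathbf{1}}_n)$, so I would peel off the top component and write $G = \sigma_1(G)\,\tilde{\mathbf{1}}_m\tilde{\mathbf{1}}_n^\sT + E$, where $E := G - \sigma_1(G)\,\tilde{\mathbf{1}}_m\tilde{\mathbf{1}}_n^\sT$ collects the remaining singular components and hence satisfies $\|E\|_2 = \sigma_2(G)$ exactly (its largest remaining singular value). Since $\Omega$ is the support of $G$, the sampling operator acts entrywise as $R_\Omega(Z) = G \circ Z$ (Hadamard product); for a rank-one $Z = uv^\sT$ this becomes $R_\Omega(uv^\sT) = D_u G D_v$ with $D_u = \mathrm{diag}(u)$, $D_v = \mathrm{diag}(v)$. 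Substituting the splitting of $G$ and using $D_u \mathbf{1}_m = u$ and $\mathbf{1}_n^\sT D_v = v^\sT$ gives the exact identity $\tfrac{\sqrt{mn}}{\sigma_1(G)}R_\Omega(uv^\sT) - uv^\sT = \tfrac{\sqrt{mn}}{\sigma_1(G)}D_u E D_v$; that is, the normalization $\sqrt{mn}/\sigma_1(G)$ is precisely what cancels the rank-one ``mean'' part, leaving only the spectral-gap remainder.

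Extending to general $Z$ of rank $\le r$, I would fix an arbitrary factorization $Z = UV^\sT$ with columns $u_1,\dots,u_r$ of $U$ and $v_1,\dots,v_r$ of $V$, so that $Z = \sum_k u_k v_k^\sT$, and sum the rank-one identity to get $\tfrac{\sqrt{mn}}{\sigma_1(G)}R_\Omega(Z) - Z = \tfrac{\sqrt{mn}}{\sigma_1(G)}\sum_k D_{u_k} E D_{v_k}$. The crux is to bound $\|\sum_k D_{u_k} E D_{v_k}\|_2$. Testing against unit vectors $y \in \reals^m$ and $x \in \reals^n$, I would rewrite $y^\sT\big(\sum_k D_{u_k} E D_{v_k}\big)x = \sum_k (u_k \circ y)^\sT E (v_k \circ x) = \mathrm{tr}(A^\sT E B)$, where $A$ has columns $u_k \circ y$ and $B$ has columns $v_k \circ x$. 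Bounding termwise by $\|E\|_2$ and applying Cauchy--Schwarz gives $|\mathrm{tr}(A^\sT E B)| \le \|E\|_2 \|A\|_F \|B\|_F$, and the key computation $\|A\|_F^2 = \sum_i y_i^2 \|U_{i,\cdot}\|_2^2 \le \|U\|_{2,\infty}^2$ (and likewise for $B$) yields $\|\sum_k D_{u_k} E D_{v_k}\|_2 \le \sigma_2(G)\max(\|U\|_{2,\infty}^2, \|V\|_{2,\infty}^2)$. Taking the infimum over factorizations and recalling the definition of $\|Z\|_{\max}$ produces \eqref{partiala}.

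For the passage to \eqref{partialb}, the first inequality is the bound $\|Z\|_{\max} \le \sqrt{r}\,\|Z\|_\infty$ valid for any matrix of rank $\le r$; this is the standard relation between the max-norm and the entrywise norm for low-rank matrices (after the rescaling $U \mapsto cU$, $V \mapsto V/c$ that equalizes $\|U\|_{2,\infty}$ and $\|V\|_{2,\infty}$, it is realized by a balanced rotation of the compact SVD), and I would either cite it or include the short balancing argument. The final inequality is then purely arithmetic: substituting $\sigma_1(G) = d\sqrt{m/n}$ from (A1), $\sigma_2(G) \le C\sqrt{d}$ from (A2), and $|\Omega| = md$ reduces the claim $\tfrac{\sqrt{rmn}\,\sigma_2(G)}{\sigma_1(G)} \le Cm\sqrt{nr/|\Omega|}$ to $\sqrt{r}\,n \le \sqrt{mnr}$, i.e.\ $n \le m$, which holds by hypothesis. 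I expect the main obstacle to be the operator-norm estimate on $\sum_k D_{u_k} E D_{v_k}$: the diagonal conjugations couple the spectral gap of $E$ to the row-norm geometry of the factors, and the trace reformulation followed by Cauchy--Schwarz is exactly what makes the max-norm (rather than some dimension-dependent quantity) appear; correctly invoking the $\sqrt{r}$ max-norm bound and keeping the exponents straight in the closing arithmetic chain are the remaining points needing care.
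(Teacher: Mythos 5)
Your proof is correct and follows essentially the same route as the paper's: the paper achieves the same cancellation by decomposing each $y\circ U_{\cdot,\ell}$ into its component along $\tilde{\mathbf{1}}_m$ plus an orthogonal remainder and using $\tilde{\mathbf{1}}_m^\sT G = \sigma_1(G)\tilde{\mathbf{1}}_n^\sT$, which is just the dual view of your splitting $G = \sigma_1(G)\,\tilde{\mathbf{1}}_m\tilde{\mathbf{1}}_n^\sT + E$ with $\|E\|_2=\sigma_2(G)$, and the subsequent Cauchy--Schwarz over the factor columns together with the computation $\sum_i y_i^2\|U_{i,\cdot}\|_2^2 \le \|U\|_{2,\infty}^2$ is identical. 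The closing steps (the bound $\|Z\|_{\max}\le\sqrt{r}\,\|Z\|_\infty$, which the paper simply cites, and the arithmetic from (A1), (A2), $|\Omega|=md$ using $n\le m$) also match.
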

\begin{proof} 
By definition of $\|Z\|_{\max}$, there exist $U \in \reals^{m \times k}$ and $V \in \reals^{n \times k}$ for some $1 \le k \le \min(m,n)$ such that $Z = U V^\top$,  $\|U\|_{2,\infty}^2 \le \|Z\|_{\max}$ and $\|V\|_{2,\infty}^2 \le \|Z\|_{\max}$.
Since $\text{rank}(Z) \le r$, we have $k \le r$, but this fact is not needed in our proof.
By the variational definition of operator norm,
\[
  \| \frac{\sqrt{mn}}{\sigma_1(G)} R_{\Omega}(Z) - Z \|_2 
\]
\[
   = \max_{x,y:\, \|x\|_2 =1 = \|y\|_2} 
       y^\top \left( \frac{\sqrt{mn}}{\sigma_1(G)} R_{\Omega}(Z) - Z \right) x .
\] 
We also have $R_{\Omega}(Z) = Z \circ G$ where $\circ$ denotes the Hadamard (elementwise) product. Letting $U_{\cdot,\ell}$ and $V_{\cdot,\ell}$ respectively denote the $\ell$-th column of $U$ and $V$, we write
\[Z = \sum_{\ell=1}^k U_{\cdot,\ell} V_{\cdot,\ell}^\top\,,\]
We therefore have
\begin{align*}
 & y^\top  \left( \frac{\sqrt{mn}}{\sigma_1(G)} {\cal R}_{\Omega}(Z) - Z \right) x  \\
   & = \sum_{i=1}^k \left( \frac{\sqrt{mn}}{\sigma_1(G)} (y \circ U_{\cdot,\ell} )^\top G (x \circ V_{\cdot,\ell}) - (y^\top U_{\cdot,\ell}) (x^\top V_{\cdot,\ell}) \right)\,.
\end{align*}
Normalize $\mathbf{1}_{m}$ to unit norm as $\tilde{\mathbf{1}}_{m} = \mathbf{1}_{m}/\sqrt{m}$, and similarly for $\tilde{\mathbf{1}}_{n}$.
Let $y \circ U_{\cdot,\ell} = \alpha_\ell \tilde{\mathbf{1}}_{m} + \beta_\ell \tilde{\mathbf{1}}_{m \perp}^\ell$ where $\tilde{\mathbf{1}}_{m \perp}^\ell$ is a unit norm vector orthogonal to $\tilde{\mathbf{1}}_{m}$. Then $\alpha_\ell = \tilde{\mathbf{1}}_{m}^\top (y \circ U_{\cdot,\ell}) = y^\top U_{\cdot,\ell}/\sqrt{m}$. Hence
\begin{align}
  y^\top & \left( \frac{\sqrt{mn}}{\sigma_1(G)} {\cal R}_\Omega (Z) - Z \right) x \nonumber \\
    & = \sum_{\ell=1}^k \Big( \frac{\sqrt{mn}}{\sigma_1(G)} \Big[ \frac{1}{\sqrt{m}} y^\top U_{\cdot,\ell} 
         \tilde{\mathbf{1}}_{m}^\top G (x \circ V_{\cdot,\ell})  \nonumber \\
    & \quad \quad   +\beta_\ell \tilde{\mathbf{1}}_{m \perp}^{\ell \top}  G (x \circ V_{\cdot,\ell}) \Big] - (y^\top U_{\cdot,\ell}) (x^\top V_{\cdot,\ell}) \Big) \nonumber \\
   & = \sum_{\ell=1}^k \left( \frac{\sqrt{mn}}{\sigma_1(G)} \beta_\ell \tilde{\mathbf{1}}_{m \perp}^{\ell \top}  G (x \circ V_{\cdot,\ell})  \right),  \label{eqA_10}
\end{align}
where we used the facts that $\tilde{\mathbf{1}}_{m}^\top G = \sigma_1(G) \tilde{\mathbf{1}}_{n}^\top$ and $\tilde{\mathbf{1}}_{n}^\top (x \circ V_{\cdot,\ell}) = x^\top V_{\cdot,\ell}/\sqrt{n}$. Since $\tilde{\mathbf{1}}_{m}$ is the top left singular vector of $G$, we have
\[
   | \tilde{\mathbf{1}}_{m \perp}^{\ell\top}  G z | \le \sigma_2(G) \|z\|_2 \mbox{ for any } z \in \reals^{n} .
\]
Using the above inequality in (\ref{eqA_10}) we obtain
\begin{align}
  y^\top & \left( \frac{\sqrt{mn}}{\sigma_1(G)} R_\Omega (Z) - Z \right) x \nonumber \\
    & \le \frac{\sqrt{mn}}{\sigma_1(G)} \sigma_2(G) \sum_{\ell=1}^k |\beta_\ell| \|x \circ V_{\cdot,\ell}\|_2  \nonumber \\
    & \le \frac{\sqrt{mn}}{\sigma_1(G)} \sigma_2(G) \sqrt{\sum_{\ell=1}^k \beta_\ell^2} \sqrt{\sum_{\ell=1}^k \|x \circ V_{\cdot,\ell}\|_2^2 } . \label{eqA_15}
\end{align}
We have $\beta_\ell = \tilde{\mathbf{1}}_{m\perp}^{\ell \top} (y \circ U_{\cdot,\ell})$. Hence, $|\beta_\ell| \le \|y \circ U_{\cdot,\ell} \|_2$. Therefore,
\begin{align}
    \sum_{\ell=1}^k \beta_\ell^2 & \le \sum_{\ell=1}^k \|y \circ U_{\cdot,\ell} \|_2^2 = \sum_{i = 1}^{m}  \sum_{\ell = 1}^k y_i^2 U_{i,\ell}^2 \nonumber \\
      & = \sum_{i = 1}^{m} y_i^2   \|U_{i,\cdot}\|_2^2 \le \|U\|_{2,\infty}^2 \sum_{i = 1}^{m} y_i^2 \le \|Z\|_{\max}   \label{eqA_18}
\end{align}
where we used $\sum_{i = 1}^{m} y_i^2 = 1$. Similarly, we have
\begin{align}
  \sum_{\ell=1}^k & \|x \circ V_{\cdot,\ell}\|_2^2  = \sum_{j = 1}^{n}  \sum_{\ell = 1}^k x_j^2 V_{j,\ell}^2 \nonumber \\
      & = \sum_{j = 1}^{n} x_j^2   \|V_{j,\cdot}\|_2^2 \le \|V\|_{2,\infty}^2 \sum_{j = 1}^{n} x_j^2 \le \|Z\|_{\max}  \, .  \label{eqA_20}
\end{align}
It then follows from (\ref{eqA_15})-(\ref{eqA_20}) that
\begin{align}
  y^\top  \left( \frac{\sqrt{mn}}{\sigma_1(G)} R_\Omega (Z) - Z \right) x 
    & \le \frac{ \sqrt{mn} \sigma_2 (G) } { \sigma_1 (G) } \| Z \|_{\max} \nonumber \\
\end{align}
This establishes (\ref{partiala}). Now use $\|Z\|_{\max} \le \sqrt{r} \|Z\|_\infty$ \cite{Cai13} and $| \Omega | = m d$ to establish (\ref{partialb}).
\end{proof}

\begin{lemma} \label{lemma:hpartial}
Let $M, M^* \in {\mathcal C}$. Then we have
\begin{align*}
    \left\| \left( M-M^* \right)_\Omega \right\|_F  \ge \frac{\sigma_1(G)}{\sqrt{2rmn}} \left\|  M-M^*  \right\|_F
         - 2 \alpha \sqrt{r} \sigma_2(G) .
\end{align*}
\end{lemma}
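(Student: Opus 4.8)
The plan is to apply Lemma \ref{lemma:partial} to the error matrix $Z := M - M^*$ and then convert its operator-norm guarantee into a Frobenius-norm lower bound on the sampled matrix $R_\Omega(Z) = (M-M^*)_\Omega$. First I would record the two structural facts that make Lemma \ref{lemma:partial} applicable to $Z$: since $M, M^* \in {\mathcal C}$ each have rank at most $r$, their difference satisfies $\text{rank}(Z) \le 2r$; and by the triangle inequality $\|Z\|_\infty \le \|M\|_\infty + \|M^*\|_\infty \le 2\alpha$. These are exactly the two ingredients ($r$ replaced by $2r$, and a bound on $\|Z\|_\infty$) that feed the right-hand side of (\ref{partiala})--(\ref{partialb}).

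The core of the argument is a reverse triangle inequality. Writing the operator $T := \frac{\sqrt{mn}}{\sigma_1(G)} R_\Omega - I$, Lemma \ref{lemma:partial} controls $\|T(Z)\|_2$ from above. Since $\frac{\sqrt{mn}}{\sigma_1(G)} R_\Omega(Z) = Z + T(Z)$, the reverse triangle inequality gives
\[
  \frac{\sqrt{mn}}{\sigma_1(G)}\,\|R_\Omega(Z)\|_2 \;\ge\; \|Z\|_2 - \|T(Z)\|_2 \;\ge\; \|Z\|_2 - \frac{\sqrt{mn}\,\sigma_2(G)}{\sigma_1(G)}\|Z\|_{\max}.
\]
The decisive move then involves two norm comparisons pointing in opposite directions. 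On the sampled side I would use $\|R_\Omega(Z)\|_F \ge \|R_\Omega(Z)\|_2$ to upgrade from operator to Frobenius norm (this is free and points the right way for a lower bound); on the $Z$ side I would exploit low rank through $\|Z\|_F \le \sqrt{2r}\,\|Z\|_2$, equivalently $\|Z\|_2 \ge \frac{1}{\sqrt{2r}}\|Z\|_F$, which is what turns an operator-norm lower bound into a Frobenius-norm lower bound. Substituting $\|Z\|_{\max} \le \sqrt{r}\,\|Z\|_\infty \le 2\alpha\sqrt{r}$ and rearranging yields
\[
  \|R_\Omega(Z)\|_F \;\ge\; \frac{\sigma_1(G)}{\sqrt{2rmn}}\,\|M-M^*\|_F - 2\alpha\sqrt{r}\,\sigma_2(G),
\]
which is the claim.

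I expect the main obstacle to be orienting these two norm inequalities correctly: Lemma \ref{lemma:partial} naturally controls an operator norm of a \emph{difference}, whereas the target is a \emph{lower} bound on a Frobenius norm, so one must be careful that $\|\cdot\|_F \ge \|\cdot\|_2$ is used on $R_\Omega(Z)$ while the low-rank inflation $\|Z\|_F \le \sqrt{2r}\|Z\|_2$ is used on $Z$. A secondary point of care is the bookkeeping of the low-rank factors, since $Z$ has rank up to $2r$: the $\sqrt{2r}$ appears in the multiplicative term while the $\sqrt{r}$ from the $\|\cdot\|_{\max}$ bound appears in the additive term, and keeping these separate is what produces the stated constants.
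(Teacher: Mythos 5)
Your proposal is correct and follows essentially the same route as the paper: apply Lemma \ref{lemma:partial} to $Z=M-M^*$ (rank at most $2r$, $\|Z\|_\infty\le 2\alpha$), use the reverse triangle inequality on the operator norm, then pass to the Frobenius norm via $\|Z_\Omega\|_F\ge\|Z_\Omega\|_2$ on one side and $\|Z\|_F\le\sqrt{2r}\,\|Z\|_2$ on the other. Even your bookkeeping of the constants (the $\sqrt{2r}$ in the multiplicative term versus the $\sqrt{r}$ from the $\|\cdot\|_{\max}$ bound in the additive term) matches the paper's treatment exactly.
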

\begin{proof}
Let $Z\equiv M-M^*$, $a = \sqrt{mn}/\sigma_1(G)$, and $b = (\sigma_2(G) / \sigma_1(G)) \sqrt{  r mn }$. Then by Lemma \ref{lemma:partial} and the fact that 
rank$(Z) \le {\rm rank}(M) + {\rm rank}(M^*) \le 2r$, we have 
\begin{align}  
  \left| a \| Z_\Omega \|_2  -  \|  Z \|_2 \right| & \le \| a Z_\Omega - Z \|_2  
      \le b \|Z\|_\infty .  \label{eq7.1}
\end{align}
Using $\|Z\|_\infty = \|M-M^*\|_\infty \le \|M\|_\infty + \|M^*\|_\infty \le 2 \alpha$, (\ref{eq7.1}) can be expressed as
$\|  Z \|_2 \le a \| Z_\Omega \|_2 + 2 \alpha b$.
Since $\|A\|_2 \le \|A\|_F$ $\forall A$, we then have
$\|  Z \|_2 \le a \| Z_\Omega \|_F + 2 \alpha b$.
Since $\|A\|_F \le \sqrt{{\rm rank}(A)} \|A\|_2$ $\forall A$, we  have
$\|  Z \|_F  \le \sqrt{2r} \|Z\|_2  \le \sqrt{2r} a \| Z_\Omega \|_F + 2\sqrt{2r}  \alpha b$, leading to
the desired result.
\end{proof}

{\em Proof of Theorem \ref{thm:main_gd}:} 
Consider $\tilde{F}_{\Omega, Y}(\theta) = {F}_{\Omega, Y}(M)$. The objective function $F_{\Omega, Y}(M)$ is continuous in $M$ and the set
${\mathcal C}$ is compact, therefore, $F_{\Omega, Y}(M)$ achieves a minimum in ${\mathcal C}$. Now suppose that $\widehat{M} \in {\mathcal C}$ minimizes ${F}_{\Omega, Y}(M)$. Then ${F}_{\Omega, Y}(\widehat{M}) \le {F}_{\Omega, Y}(M)$ $\forall M \in {\mathcal C}$, including $ M = M^*$.
Define
\begin{equation}
   c_g  = 2.01 L_{ \alpha} \sqrt{2 r m} ,  \quad \quad
   c_h  = \frac{\sigma_1^2(G) \gamma_{ \alpha} }{16  r mn}\,.
\end{equation}
Using (\ref{geq3.0}) and Lemmas \ref{lemma:grad} and \ref{lemma:hess}, we have w.h.p.\ (specified in Lemma \ref{lemma:grad})
\begin{align*}  
    &{F}_{\Omega, Y}(M)  \\
    &\ge {F}_{\Omega, Y}(M^*) - c_g \|M-M^*\|_F  + \frac{\gamma_{ \alpha}}{2 } \|( M-M^* )_\Omega \|_F^2.
\end{align*}
Since $\widehat{M}$ minimizes ${F}_{\Omega, Y}(M)$, we have 
\begin{align}  
   0 & \ge  {F}_{\Omega, Y}(\widehat{M}) - {F}_{\Omega, Y}(M^*) \nonumber \\
     &  \ge - c_g \|\widehat{M}-M^*\|_F + \frac{\gamma_{ \alpha}}{2 } \| ( \widehat{M}-M^* )_\Omega \|_F^2 . \label{geq4.03} 
\end{align}
Set $\eta = 2 \alpha r (\sigma_2(G) / \sigma_1(G)) \sqrt{2mn}$ and $\eta_0 = \sigma_1(G)/ \sqrt{2rmn}$. Then Lemma \ref{lemma:hpartial} implies $\| ( M-M^*)_\Omega \|_F \ge \eta_0 \left[ \|M-M^*\|_F - \eta \right]$. Now consider two cases: (i) $\|\widehat{M}-M^*\|_F < 2\eta$, (ii) $\|\widehat{M}-M^*\|_F \ge 2\eta$. In case (i), we clearly have an obvious upperbound on $\|\widehat{M}-M^*\|_F$. Turning to case (ii), we have
\begin{align} 
  \|\widehat{M}-M^*\|_F - \eta & \ge \|\widehat{M}-M^*\|_F - \frac{1}{2} \|\widehat{M}-M^*\|_F \nonumber \\
  & = \frac{1}{2} \|\widehat{M}-M^*\|_F . \label{geq4.05}
\end{align}
Using (\ref{geq4.03}), (\ref{geq4.05}) and Lemma \ref{lemma:hpartial} with $M = \widehat{M}$, we have
\begin{align}
  0 & \ge  {F}_{\Omega, Y}(\widehat{M}) - {F}_{\Omega, Y}(M) \nonumber \\
        & \ge  - c_g \|\widehat{M}-M\|_F + {c_h}  \|\widehat{M}-M\|_F^2 \nonumber \\
         & = \|\widehat{M}-M\|_F \left[ - c_g + {c_h}  \|\widehat{M}-M\|_F \right] .
                        \label{geq4.0}
\end{align}
In order for (\ref{geq4.0}) to be true, we must have 
$\|\widehat{M}-M^*\|_F \le c_g / c_h$
otherwise the right-side of (\ref{geq4.0}) is positive violating (\ref{geq4.0}). Combining the two cases, we obtain
\begin{align} 
   \|\widehat{M}- & M^*\|_F  \le \max \left( 2 \eta , \frac{c_g}{c_h} \right) \nonumber \\
     &  = \max \left(4 \alpha r \sqrt{2mn} \frac{\sigma_2(G)}{\sigma_1(G)} , \frac{32.16 \sqrt{2} L_{ \alpha} (rm)^{1.5}
                n }{\gamma_{ \alpha} \sigma_1^2(G) } \right) \label{geq4.08}
\end{align}
This is the bound stated in (\ref{thm_eq1}) of the theorem after division by $\sqrt{mn}$.
The high probability stated in the theorem follows from Lemma \ref{lemma:grad} after setting $\varepsilon =0.5$. Finally, we use 
$\sigma_2(G)/\sigma_1(G) \le C/\sqrt{d} = C \sqrt{m} /\sqrt{|\Omega|}$ and 
$1/\sigma_1^2(G) \le 1/d^2 =  m^2 /|\Omega|^2$ to derive (\ref{thm_eq2}). 
$\hfill \;\; \blacksquare$

{
\bibliographystyle{IEEEtran} 
\bibliography{IEEEabrv,My-bibliography}
}

\end{document}